\newtheorem{proposition}{Proposition}
\newtheorem{assumption}{Assmuption}
\newcommand{\SE}{\mathrm{SE}}
\newcommand{\pick}{\mathrm{pick}}
\newcommand{\place}{\mathrm{place}}
\newcommand{\pre}{\mathrm{preplace}}
\newcommand{\ours}{\textsc{Match Policy}}
\title{\LARGE \bf
$\ours$: A Simple Pipeline from Point Cloud Registration to Manipulation Policies
}
\author{Haojie Huang$^1$ \qquad Haotian Liu$^2$ \qquad Dian Wang$^1$ \qquad Robin Walters$^{1\ast}$ \qquad Robert Platt$^{1\ast}$\\
$^1$Khoury College of Computer Science, Northeastern University \quad $^2$Worcester Polytechnic Institute\\
{\texttt \{huang.haoj; r.walters; r.platt\} @northeastern.edu \qquad $^{\ast}$Equal Advising}
}
\begin{document}

\maketitle
\thispagestyle{empty}
\pagestyle{empty}

\begin{abstract}
Many manipulation tasks require the robot to rearrange objects relative to one another. Such tasks can be described as a sequence of relative poses between parts of a set of rigid bodies. In this work, we propose $\ours$, a simple but novel pipeline for solving high-precision pick and place tasks. Instead of predicting actions directly, our method registers the pick and place targets to the stored demonstrations. This transfers action inference into a point cloud registration task and enables us to realize nontrivial manipulation policies without any training. $\ours$ is designed to solve high-precision tasks with a key-frame setting. By leveraging the geometric interaction and the symmetries of the task, it achieves extremely high sample efficiency and generalizability to unseen configurations. We demonstrate its state-of-the-art performance across various tasks on
RLbench benchmark compared with several strong baselines and test it on a real robot with six tasks. Videos and code are available on \href{https://haojhuang.github.io/match_page}{\textcolor{orange}{https://haojhuang.github.io/match\_page/}}.

\end{abstract}

\section{introduction}

Many complex manipulation tasks can be decomposed as a sequence of pick-place actions, each of which can further be interpreted as inferring two geometric relationships: the pick pose is the relative pose between the gripper and the pick target, and the place pose is the relative pose between the pick target and the place target. 
Previous imitation learning methods~\cite{shridhar2023perceiver,goyal2023rvt,ke20243d} directly predicted the pick-place actions given the entire observation signal after being trained with a copious amount of demonstrations. However, these methods did not highlight the importance of local geometric relationships and thus struggled to learn high-precision manipulation policies such as those required to solve the \textit{Plug-Charger} and \textit{Insert-Knife} tasks in RLBench~\cite{james2020rlbench}. Meanwhile, recent works~\cite{simeonov2022neural,huang2024imagination, simeonov2023shelving, pan2023tax} leverages segmented point clouds to reason about the geometric interaction between object instances. However, they often require a significant amount of effort before being applied to the real robots. Methods like NDFs~\cite{simeonov2022neural} and its variation~\cite{simeonov2023se} require significant per-object pretraining and thus cannot be simply used on different object sets. Methods like Tax-Pose~\cite{pan2023tax} and RPDiff~\cite{simeonov2023shelving} can only predict a single-step single-task action after hours of training, which dramatically limits their potential on multi-step and long-horizon tasks.

To address the constraints of current methods and provide a convenient tool for robotic pick-place policies that require minimal effort to deploy across different tasks,
we propose $\ours$, a simple pipeline that transfers manipulation policy learning into point cloud registration (PCR). 
$\ours$ constructs a combined point cloud of the desired scene using segmented point clouds, where objects are arranged in the expected configuration. As illustrated in Fig~\ref{fig:macth_policy}, we store a collection of combined point clouds from the demonstration data. During inference, the point clouds of the pick and place objects are registered to these stored point clouds, and the resulting registration poses are used to compute the action.
Unlike the prior works that require heavy training, we realize this pipeline with optimization-based method: $\ours$ takes use of the RANSAC and ICP and produces the pick-place policy immediately after the demonstration collection. 


Our proposed method has a couple of key advantages. First, the PCR step corresponds the local geometric details shown in the demonstration to the new observation, enabling the agent to solve high-precision tasks like \textit{Plug-Charger} and \textit{Insert-knife}.
Second, $\ours$ illustrates great sample efficiency, i.e., the ability to learn good policies with relatively few expert demonstrations. We demonstrate it can achieve the compelling performance with only one demonstration and can generalize to many different novel poses with various experiments. 
Finally, $\ours$ shows high adaptability
when tested with different camera settings, e.g., single camera view and low-resolution cameras, as well as on tasks with long horizons and articulated objects.

Our contribution in this work are as follows. 1) We provide a simple yet novel pipeline that realizes manipulation pick-place policy without any training. 2) We show the benefits of precision and sample efficiency of this method. 3) We demonstrate that it achieves compelling performance on both simulated and real-robot experiments.


\begin{figure*}[t]
    \centering
    \includegraphics[width = 0.85\textwidth]{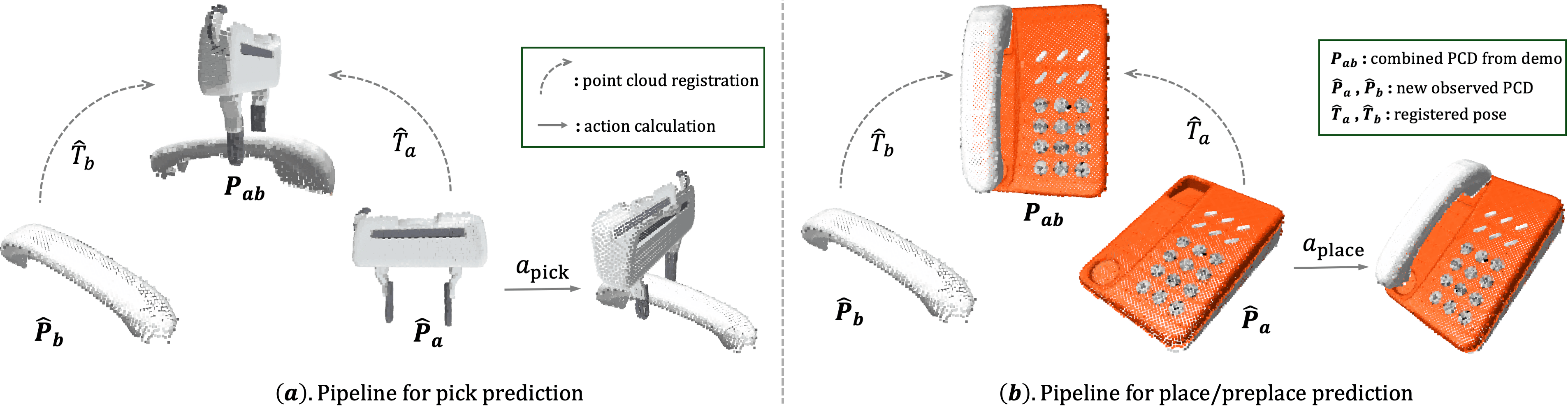}
    \caption{Pipeline of $\ours$. (a). To generate the pick action, we register the gripper ($\hat{P}_a$) and the phone ($\hat{P}_b$) to the demonstrated combined point clouds ($P_{ab}$). The two registration poses $(\hat{T}_a, \hat{T}_b)$ are used to calculate the action transforming the gripper to desired pick configuration. (b). The place action
    prediction follows the similar pipeline.}
    \label{fig:macth_policy}
    \vspace{-0.5cm}
\end{figure*}


\section{related work}

\textbf{Point Cloud Registration.} Point cloud registration (PCR) is defined to find the best transformation that matches two sets of point clouds~\cite{cheng2018registration}. Current methods can be distinguished into non-learning optimization methods and deep learning-based techniques~\cite{zhang2020deep}. The non-learning methods include two representative ones, Iterative Closest Point (ICP)~\cite{besl1992method} and RANSAC~\cite{fischler1981random} as well as their variations. ICP)~\cite{besl1992method} and its variations~\cite{zhang2021fast, yang2015go, rusinkiewicz2019symmetric, zhang2021fast, park2017colored} usually require an initial guess. They search for the closest correspondence points and estimate transformation until convergence. RANSAC-based methods~\cite{le2019sdrsac, fischler1981random, chum2003locally, choi2015robust} can be interpreted as an outlier detection method and have also demonstrated effective registration results.
These non-learning methods are plug-and-play for any objects, although they often require sufficient overlap to guarantee successful registration~\cite{huang2021comprehensive}. 
The current research also focuses on deep learning models to learn the local and global geometric representation to calculate the correspondence~\cite{zhang2024comprehensive}. 
Deep Closet Point (DCP) utilizes DGCNN~\cite{wang2019dynamic} to embed local features and a pointer network to calculate the correspondence~\cite{wang2019deep}. The PRNet~\cite{wang2019prnet} introduced keypoint recognition to address
partial to partial point cloud registration. 
Recently, Methods like Predator~\cite{huang2021predator} and PEAL~\cite{yu2023peal} introduced attention blocks to locate the overlap region and produce correspondence. 
In this work, we investigate non-learning optimization methods on robotic pick-place data.

\textbf{Manipulation Learning on Point Cloud.} 
As a flexible and informative representation of the robot's operating environment, point cloud has demonstrated superior effectiveness compared to other visual formats, such as RGB-D images~\cite{ze20243d,zhu2024point,peri2024point}. Recent works have broadly applied this rich representation across various robotic manipulation problems, including reinforcement learning~\cite{liu2022frame,qin2023dexpoint,xie2023part}, closed-loop policy learning~\cite{ze20243d,wang2024rise,xian2023chaineddiffuser,ma2024hierarchical}, key-point based methods~\cite{gervet2023act3d,chen2023polarnet,ke20243d}, and robotic pick-place~\cite{simeonov2022neural,simeonov2023shelving,ryu2022equivariant,ryu2023diffusion,pan2023tax,eisner2024deep}. However, a major challenge in utilizing previous policy learning frameworks is their computational complexity and the significant effort required to adapt them to new tasks.
In contrast, our method provides a simple and convenient solution to realizing manipulation pick-place policy without parameterization and training. It can be deployed effectively immediately after demonstration collection.

\textbf{Manipulation Learning with Sample Efficiency.} Robotic tasks defined in 3D Euclidean space are invariant to translations, rotations, and reflections which redefine the coordinate frame but do not otherwise alter the task. Recent advancements in equivariant modeling~\cite{e2cnn,deng2021vector,cesa2022a,liao2022equiformer,he2021efficient} provide a powerful tool to encode symmetries in robotics and other related fields~\cite{he2022neural,li2024affine,li2025affine}.
\cite{zhu2022grasp, zhu2023robot,huang2023edge, huorbitgrasp} used equivariant models to leverage pick symmetries for grasp learning. \cite{yang2024equivact} proposed an equivariant policy for deformable and articulated object manipulation on top of pre-trained equivariant visual representation. \cite{zhao2022integrating, zhao2024mathrm} used equivariant maps to enable efficient planning.
Other works~\cite{wang2021q,simeonov2022neural, simeonov2023se, Huang-RSS-22, huang2024leveraging,huang2024fourier,ryu2022equivariant,ryu2023diffusion,pan2023tax,eisner2024deep} 
leverage symmetries in pick and place and achieve high sample efficiency. \cite{jia2024open,huanglanguage} explore the symmetry under the language-conditioned policy and realize few-shot learning with language steerable kernels. Recently, \cite{wang2022so2equivariant, jia2023seil, wang2024equivariant, wang2022onrobot,liu2023continual,kohler2023symmetric,nguyen2023equivariant,nguyen2024symmetry,yang2024equivact,yang2024equibot} realized equivariant close-loop policies and demonstrated better generalization performance with fewer demonstrations. Compared with previous work, our method takes the advantage of point cloud registration to realize the equivariant policy and shows an improvement in sample efficiency. 


\section{Problem Statement}
Considering a set of expert demonstrations $\{\mathcal{D}_i\}_{i=1}^{n}$,  each demonstration $\mathcal{D}_i$ consists of a sequence of pick and place. We represent each pick or place sample with object-centric point clouds and their transformations of the form $(P_a, P_b, T_a, T_b, \ell )$, where $P_a\in \mathbb{R}^{n\times3}$ and $P_b\in \mathbb{R}^{m\times3}$ are point clouds that represent two objects of interest, $T_a \in \mathbb{R}^{4\times 4}$ and $T_b \in \mathbb{R}^{4\times 4}$ are two rigid transformations in $\SE(3)$ represented in homogeneous coordinates that can transform $P_a$ and $P_b$ into the desired configuration,  $\ell$ is the language description explaining the action and objects. In our settings, if $\ell$ illustrates a pick action, $(P_a, P_b)$ will represent the gripper and the pick target.
If it indicates the preplace/place action, $(P_a, P_b)$ indicates the placement and the object to arrange, respectively.
Our goal is to model the policy function $f\colon ({P}_{a}, {P}_{b}, \ell) \mapsto a$ which outputs the gripper movement $a\in \SE(3)$ and can generalize to new observed point clouds in different configurations. The policy is formulated to generate the multi-step pick-place actions in the open-loop manner and each single-step action is parameterized with $(a_{\pick}, a_{\pre}, a_{\place})$\footnote{ It can also be referred to as key-frame action. The preplace action is important to solve complex tasks, e.g., \textit{Inserting} and \textit{Hanging}, without any predefined prior actions.}.

\section{Method}
We first explain the procedure (Fig~\ref{fig:macth_policy}) of $\ours$ which takes the segmented point clouds as input and outputs the key-frame actions $(a_{\pick}, a_{\pre}, a_{\place})$. We denote $\hat{P}_a$ and  $\hat{P}_b$ as the observed point clouds during inference, to distinguish them from the demonstrated point clouds.

\subsection{Procedure of $\ours$}

\textbf{Storing Combined Point Clouds ${P_{ab}}$.} We first construct the combined point cloud $P_{ab}$ from the demonstration sample $(P_a, P_b, T_a, T_b, \ell )$ by 
\begin{equation}
    P_{ab} = T_a \cdot P_a \cup T_b \cdot P_b
\end{equation}
where $\cdot$ transforms the two segmented point clouds with $T_a$ and $T_b$ to the desired configuration, and $\cup$ concatenates the two transformed point clouds. In other words, $P_{ab}$ represents either the desired pick configuration or the desired preplace/place configuration, as shown in Fig~\ref{fig:macth_policy}. Compared to using the entire scene's point cloud, this approach reduces occlusion and filters out irrelevant information.
Each $P_{ab}$ is described by the language description $\ell$.
Taking the \textit{Phone-on-Base} task shown in Fig~\ref{fig:macth_policy} and \ref{fig:3d_task_des} for example, there are three $P_{ab}$ denoted by three descriptions, ``pick up the phone", ``preplace the phone above the base" and ``place the phone on the base". We store each pair of $(\ell, P_{ab})$ as a key-value element for every demonstration. It results in a dictionary across all the tasks and all the demonstrations.


\textbf{Registering $\hat{P}_a$ and $\hat{P}_a$ to $P_{ab}$.} During inference, we first extract the point clouds of objects of interest from observation. 
After retrieving the $P_{ab}$ with the language description $\ell$ as the key, our registration model $f_{r}\colon (\hat{P}_a, \hat{P}_b, P_{ab}) \mapsto (\hat{T}_a, \hat{T}_b)$ outputs the poses that match $\hat{P}_a$ and $\hat{P}_b$ to the combined point cloud $P_{ab}$. We realize $f_r$ with optimization-based registration methods. 
Specifically, we begin by applying RANSAC~\cite{choi2015robust} to obtain the initial alignment, followed by colored ICP~\cite{park2017colored} for iterative refinement.

Apart from inferring the registration pose, we calculate the fitness score $S= \frac{\text{\# of inlier correspondences}} {\text{\# of points in source}}$ to measure the registration quality. We run the registration model $f_r$ with every sample that matches the key for several times with random seeds and calculate the fitness score. We run it multiple times because RANSAC is a stochastic algorithm.
It results in a set of registration results $\{(\hat{T}_a, \hat{T}_b, S_a, S_b)\}$ and we select the best pair of registration poses using the highest average fitness score over $S_a$ and $S_b$.

\textbf{Calculating $a_{\pick}$, $a_{\pre}$ and $a_{\place}$.} After estimating the registration poses $(\hat{T}_a, \hat{T}_b)$ for pick, preplace and place with the language key $\ell$ respectively, we calculate the pick action as the relative pose to arrange the gripper to the current pick target $\hat{P}_b$, i.e., $a_{\pick}=(\hat{T}_b)^{-1}\hat{T}_a$. 
The preplace and place action are determined by moving the pick target $\hat{P}_b$ while keeping the placement $\hat{P}_a$ stationary, to match desired configuration, i.e., $a_{\place}=(\hat{T}_a)^{-1}\hat{T}_b$. Finally, our method outputs $(a_{\pick}, a_{\pre}, a_{\place})$ that can be used to control the robot arm. This process can be repeated to infer a sequence of keyframe actions to solve complex tasks.

\subsection{Sample efficiency Analysis on $\ours$}
\label{theory}
 We then analyze the equivariant property of our method through the lens of equivariance with a mild assumption.
Since RANSAC is stochastic voting scheme, by computing a greater number of iterations, the probability of an optimal registration being produced is increased, especially when the overlapping area is above 50\%. We can assume our registration model $f_r$ is \emph{optimal} after enough running times:
\begin{assumption}
$f_{r}\colon (\hat{P}_a, \hat{P}_b, P_{ab}) \mapsto (\hat{T}_a, \hat{T}_b)$ is \emph{optimal}. For all $g\in\SE(3)$, $f_r$ will have the following properties:
\begin{enumerate}[(a)]
\item $f_r(\hat{P}_a, \hat{P}_b, g\cdot P_{ab})=(g\cdot\hat{T}_a, g\cdot\hat{T}_b)$
\item $f_r(g\cdot \hat{P}_a, \hat{P}_b, P_{ab})=(\hat{T}_a\cdot g^{-1}, \hat{T}_b)$
\item $f_r(\hat{P}_a, g\cdot \hat{P}_b, P_{ab})=(\hat{T}_a, \hat{T}_b \cdot g^{-1})$
\end{enumerate}
\label{assum:fr}
\end{assumption}

With Assumption 1, we conclude three equivariant properties of $\ours$ that improve the sample efficiency. In the following part, we will only include $a_{\pick}$ and $a_{\place}$ to reduce redundancy. We use $f_\pick$ and $f_\place$ to represent the pick and place predictors of the policy function $f$.

\textbf{Invairant Symmetry.} We first show that $\ours$ generates invariant prediction of $(a_{\pick}, a_{\place})$ when the demonstration point clouds $P_{ab}$ transforms.
\begin{proposition}
    $a_{\pick}$ and $a_{\place}$ are invariant to transformation $g\in \SE(3)$ acting on $P_{ab}$.
\end{proposition}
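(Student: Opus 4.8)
The plan is to unwind the definitions $a_{\pick} = (\hat T_b)^{-1}\hat T_a$ and $a_{\place} = (\hat T_a)^{-1}\hat T_b$, and to substitute the transformed demonstration point cloud $g\cdot P_{ab}$ into the registration map $f_r$. First I would apply Assumption~\ref{assum:fr}(a): since only the demonstration side is acted on, $f_r(\hat P_a,\hat P_b, g\cdot P_{ab}) = (g\cdot \hat T_a,\, g\cdot \hat T_b)$, so both registration poses get left-multiplied by the same $g\in\SE(3)$. Write the new poses as $\hat T_a' = g\hat T_a$ and $\hat T_b' = g\hat T_b$.

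Next I would plug these into the action formulas. For the pick action, $a_{\pick}' = (\hat T_b')^{-1}\hat T_a' = (g\hat T_b)^{-1}(g\hat T_a) = \hat T_b^{-1} g^{-1} g \hat T_a = \hat T_b^{-1}\hat T_a = a_{\pick}$. The same cancellation works for $a_{\place}' = (\hat T_a')^{-1}\hat T_b' = \hat T_a^{-1} g^{-1} g \hat T_b = \hat T_a^{-1}\hat T_b = a_{\place}$. So the $g$ introduced on the demonstration side cancels in the relative pose, which is exactly the invariance claimed. In the language of the policy predictors this reads $f_\pick(\hat P_a,\hat P_b,\ell\,;\,g\cdot P_{ab}) = f_\pick(\hat P_a,\hat P_b,\ell\,;\,P_{ab})$ and likewise for $f_\place$, i.e. the prediction does not depend on the frame in which the stored demonstration configuration happens to be expressed.

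The argument is essentially a one-line group cancellation once Assumption~\ref{assum:fr} is in hand, so there is no real analytic obstacle. The main thing to be careful about is bookkeeping: making sure the side (left vs. right) on which $g$ acts matches the convention in the assumption, and stating clearly that ``$g$ acts on $P_{ab}$'' means replacing $P_{ab}$ by $g\cdot P_{ab}$ as the third argument of $f_r$ while the observed point clouds $\hat P_a,\hat P_b$ are untouched. I would also remark that this invariance is the sample-efficiency payoff being advertised: a single stored $P_{ab}$ already represents its entire $\SE(3)$-orbit of equivalent desired configurations, so no augmentation of demonstrations over poses of the target configuration is needed.
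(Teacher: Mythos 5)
Your proposal is correct and follows exactly the paper's own argument: invoke Assumption~\ref{assum:fr}(a) to get the registration poses $g\hat T_a$, $g\hat T_b$, then cancel $g$ in the relative poses $a_{\pick}=(\hat T_b)^{-1}\hat T_a$ and $a_{\place}=(\hat T_a)^{-1}\hat T_b$. The only difference is that you spell out the place-action cancellation explicitly where the paper says ``similarly,'' which is fine.
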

\begin{proof}
  By Assumption~\ref{assum:fr}a, if $P_{ab}$ is transformed by $g\in \SE(3)$, the calculated registration poses are transformed to $g\cdot \hat{T}_a$ and $g\cdot \hat{T}_b$. The new pick action can be calculated as $a'_{\pick} = (g\hat{T}_b)^{-1}g\hat{T}_a= \hat{T}_b^{-1}g^{-1}g\hat{T}_a = a_{\pick}$. Similarly, the new place action $a'_{\place}=a_{\place}$.
\end{proof}

Proposition 1 states that many different demonstrations that produce differently transformed $P_{ab}$ result in the same action prediction using one demonstration in the same group. It enables our method to achieve good performance with very few demonstrations.

\textbf{Bi-equivariant Place Symmetry.} As noted in previous work~\cite{huang2024imagination,huang2024fourier, Huang-RSS-22, ryu2022equivariant, ryu2023diffusion}, the relative place actions that rearrange object B to another object A are bi-equivariant. That is, independent transformations of object A with $g_a \in \SE(3)$ and object B with $g_b \in \SE(3)$ result in a change ($a'_{\place}=g_a a_{\place} g_b^{-1}$) to complete the rearrangement at the new configuration. Leveraging bi-equivariant symmetries can generalize the stored place knowledge to different configurations and improve the sample efficiency. 
\begin{proposition} 
 The place action inference of $\ours$ is bi-equivariant:
 $f_{\place}(g_a\cdot P_a, g_b\cdot P_b) = g_a f_{\place}(P_a,P_b) g_b^{-1} $.
\end{proposition}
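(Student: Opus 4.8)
The plan is to mirror the proof of Proposition 1, but now tracking how the two registration poses transform when the \emph{observed} point clouds $\hat{P}_a$ and $\hat{P}_b$ are independently moved, rather than when the stored $P_{ab}$ is moved. Recall that the place action is defined as $a_{\place} = (\hat{T}_a)^{-1}\hat{T}_b$, where $(\hat{T}_a,\hat{T}_b) = f_r(\hat{P}_a,\hat{P}_b,P_{ab})$ are the poses registering the observed placement and object point clouds to the stored combined point cloud. So $f_{\place}(\hat{P}_a,\hat{P}_b)$ is really shorthand for this composition of $f_r$ with the relative-pose formula.

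First I would apply the group element $g_a$ to $\hat{P}_a$ and $g_b$ to $\hat{P}_b$ and invoke Assumption~\ref{assum:fr}(b) and (c) in turn (their effects are independent since they act on disjoint arguments). Part (b) says that replacing $\hat{P}_a$ by $g_a\cdot\hat{P}_a$ changes the first registration pose from $\hat{T}_a$ to $\hat{T}_a g_a^{-1}$ and leaves $\hat{T}_b$ untouched; part (c) says that replacing $\hat{P}_b$ by $g_b\cdot\hat{P}_b$ changes $\hat{T}_b$ to $\hat{T}_b g_b^{-1}$ and leaves the first pose untouched. Composing, the new registration pair is $(\hat{T}_a g_a^{-1},\ \hat{T}_b g_b^{-1})$. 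Then I would just substitute into the place-action formula:
\begin{equation}
a'_{\place} = (\hat{T}_a g_a^{-1})^{-1}(\hat{T}_b g_b^{-1}) = g_a \hat{T}_a^{-1} \hat{T}_b g_b^{-1} = g_a\, a_{\place}\, g_b^{-1},
\end{equation}
which is exactly the bi-equivariance claim $f_{\place}(g_a\cdot P_a, g_b\cdot P_b) = g_a f_{\place}(P_a,P_b) g_b^{-1}$.

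There is no serious obstacle here — the argument is a two-line group-theoretic manipulation once the optimality assumption is granted. The one point that deserves a sentence of care is justifying that properties (b) and (c) may be applied simultaneously/independently: since $g_a$ only acts on the $\hat{P}_a$ slot and $g_b$ only on the $\hat{P}_b$ slot, and each property asserts the other output is unchanged, the two transformations commute and compose cleanly, so $f_r(g_a\cdot\hat{P}_a, g_b\cdot\hat{P}_b, P_{ab}) = (\hat{T}_a g_a^{-1}, \hat{T}_b g_b^{-1})$. I would also note in passing that the same bookkeeping gives the analogous statement for $a_{\pick}$, and that the result is what lets the method reuse a single stored place demonstration across arbitrary relative placements of the two objects, which is the sample-efficiency payoff being advertised.
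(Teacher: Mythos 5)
Your proof is correct and follows essentially the same route as the paper's: invoke Assumption~\ref{assum:fr}(b) and (c) to get the transformed registration poses $(\hat{T}_a g_a^{-1}, \hat{T}_b g_b^{-1})$, then substitute into $a_{\place}=(\hat{T}_a)^{-1}\hat{T}_b$ to obtain $g_a a_{\place} g_b^{-1}$. Your added remark on why (b) and (c) compose independently is a minor (and welcome) elaboration, not a different argument.
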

\begin{proof}
    If $P_a$ and $P_b$ are transformed by $g_a$ and $g_b$ respectively, the calculated registration poses are $\hat{T}_a (g_a)^{-1}$ and $\hat{T}_b (g_b)^{-1}$ (Assumption~\ref{assum:fr}bc). The new place action can be estimated as $a'_{\place} = (\hat{T}_ag_a^{-1})^{-1}\hat{T}_bg_b^{-1} = g_a\hat{T}_a^{-1}\hat{T}_bg_b^{-1} = g_a a_\place g_b^{-1}$, which satisfies the bi-equivariance.
\end{proof}

The bi-equivariant design of our method significantly improves the sample efficiency. It enables our model to evaluate effectively on a lower dimensional space, the equivalence classes of samples under the SE(3) x SE(3) group.

\textbf{Equivariant Pick Symmetry.} Lastly, we show that the pick action inference is also equivariant to transformations on the pick target, i.e., $f_{\pick}(P_a, g_b\cdot P_b) = g_bf_{\pick}(P_a,P_b)$.

\begin{proof}
    By Assumption~\ref{assum:fr}c, if the pick target is transformed by $g_b$, the outputted registration pose is transformed to $\hat{T}_b{g_b}^{-1}$ and the new pick action can be calculated as $a'_{\pick}= (\hat{T}_bg_b^{-1})^{-1}\hat{T}_a = g_b\hat{T}_b^{-1}\hat{T}_a=g_ba_{\pick}$, which realizes the pick equivariance.
\end{proof}

Proposition 3 states that if the pick target transforms, the pick action will transform accordingly. The equivariant pick symmetry enables our method to acquire a better generalization of the stored pick knowledge to many different poses with few demonstrations.

The above analyses provide the theoretical proof why $\ours$ can be sample efficient. Generally, real-robot data is noisy and the transformation of the objects will not result in the exactly same transformation in the observed point clouds due to occlusion and distortion. Nonetheless, the PCR method remains useful for identifying correspondences between key geometric features. We further evaluate our proposed method across different settings and a variety of experiments, providing detailed analyses of the results.

\section{Implementation}
In our implementation, we first collect the robot data of the traditional form $(o_t, a_t)$, where $o_t$ is the observation captured by one or multiple RGB-D cameras and $a_t=(\pick, \pre, \place)$ defined in the world frame. We then segment $P_a$ and $P_b$ using the masks. The masks can either be ground-truth masks from a simulator or computed using current segmentation methods~\cite{ke2024segment, kirillov2023segment}, which are beyond the scope of this work. The gripper point cloud is directly sampled from the gripper mesh file at its canonical pose. 
For the pick, we store $T_a$ as the $\pick$ pose and $T_b$ as an identity matrix. For the place, $T_a$ is an identity matrix, while $T_b$ is calculated as the relative pose between ${\pick}$ and ${\place}$ poses. 
After constructing $P_{ab}$, we downsample it  by voxelizing the input with a
4mm voxel dimension.
Since dictionary lookups have an average time complexity of $\mathcal{O}(1)$, we store all the combined point clouds from various tasks in a single dictionary.
We also store the color information for each point cloud. 
The RANSAC~\cite{choi2015robust} and colored ICP~\cite{park2017colored} are implemented using Open3d~\cite{zhou2018open3d}.

\section{Simulations}

\begin{figure*}[t]
     \centering
     \begin{subfigure}[b]{0.135\textwidth}
         \centering
         \includegraphics[width=0.99\textwidth]{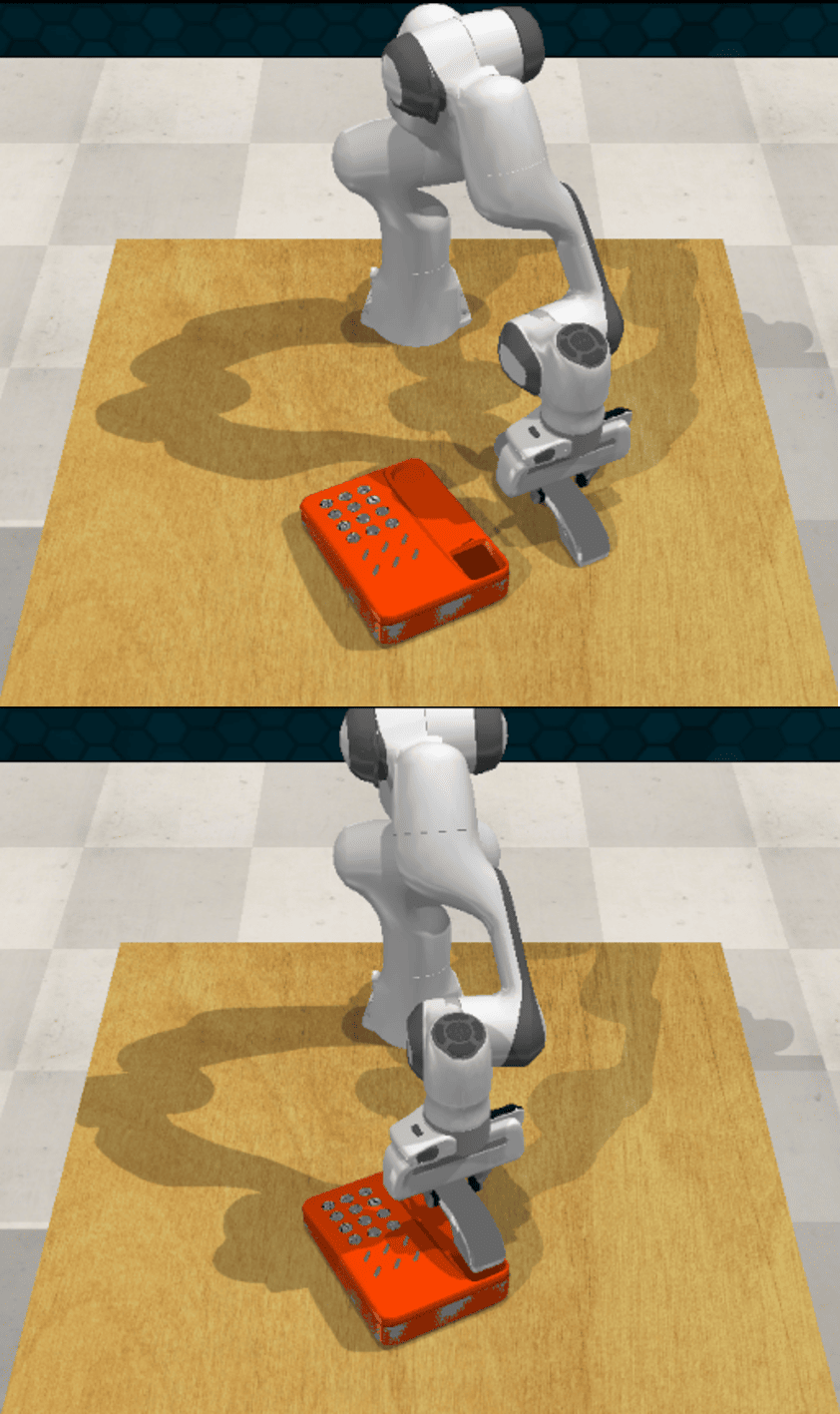}
     \end{subfigure}
     \begin{subfigure}[b]{0.135\textwidth}
         \centering
         \includegraphics[width=0.99\textwidth]{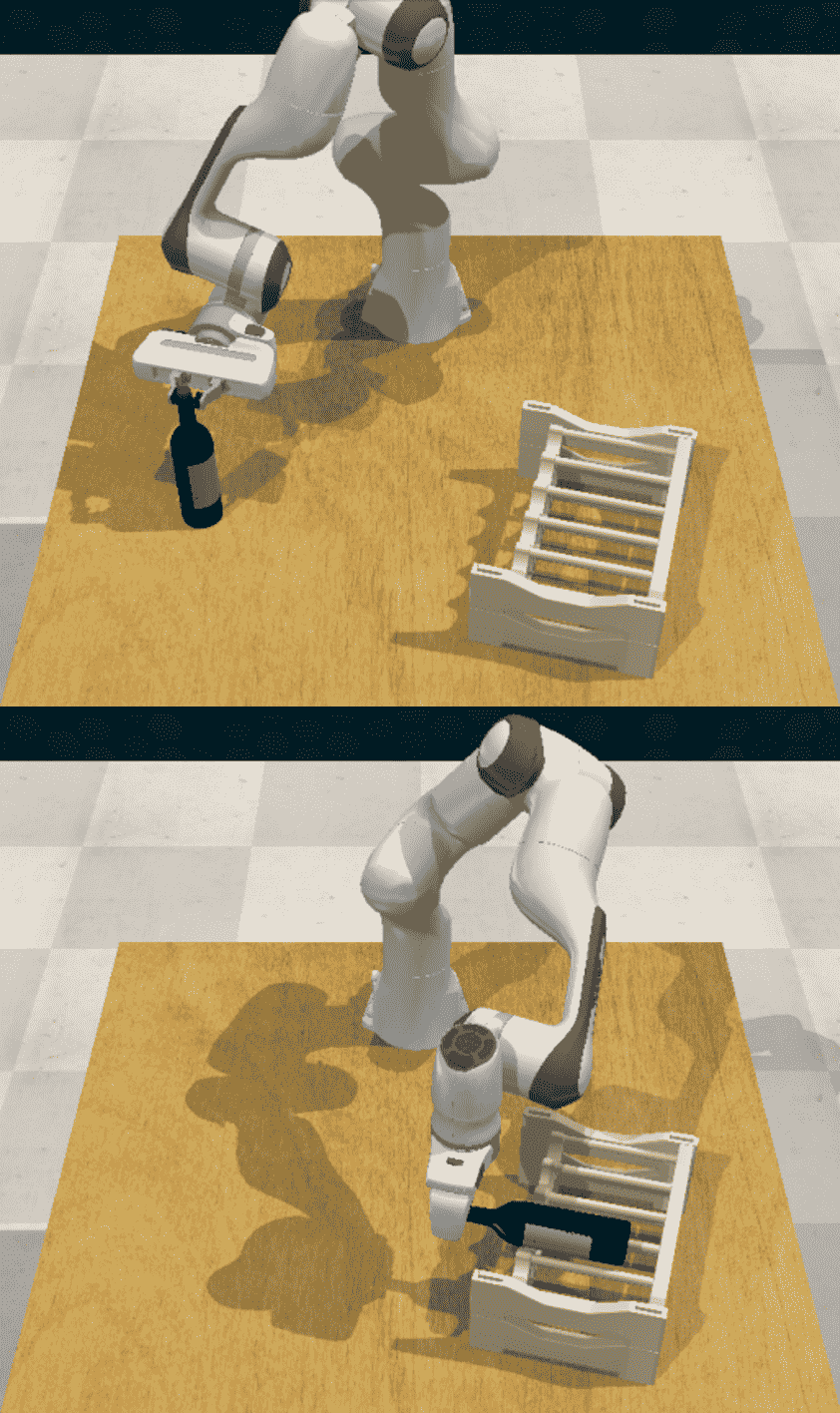}
     \end{subfigure}
     \begin{subfigure}[b]{0.135\textwidth}
         \centering
         \includegraphics[width=0.99\textwidth]{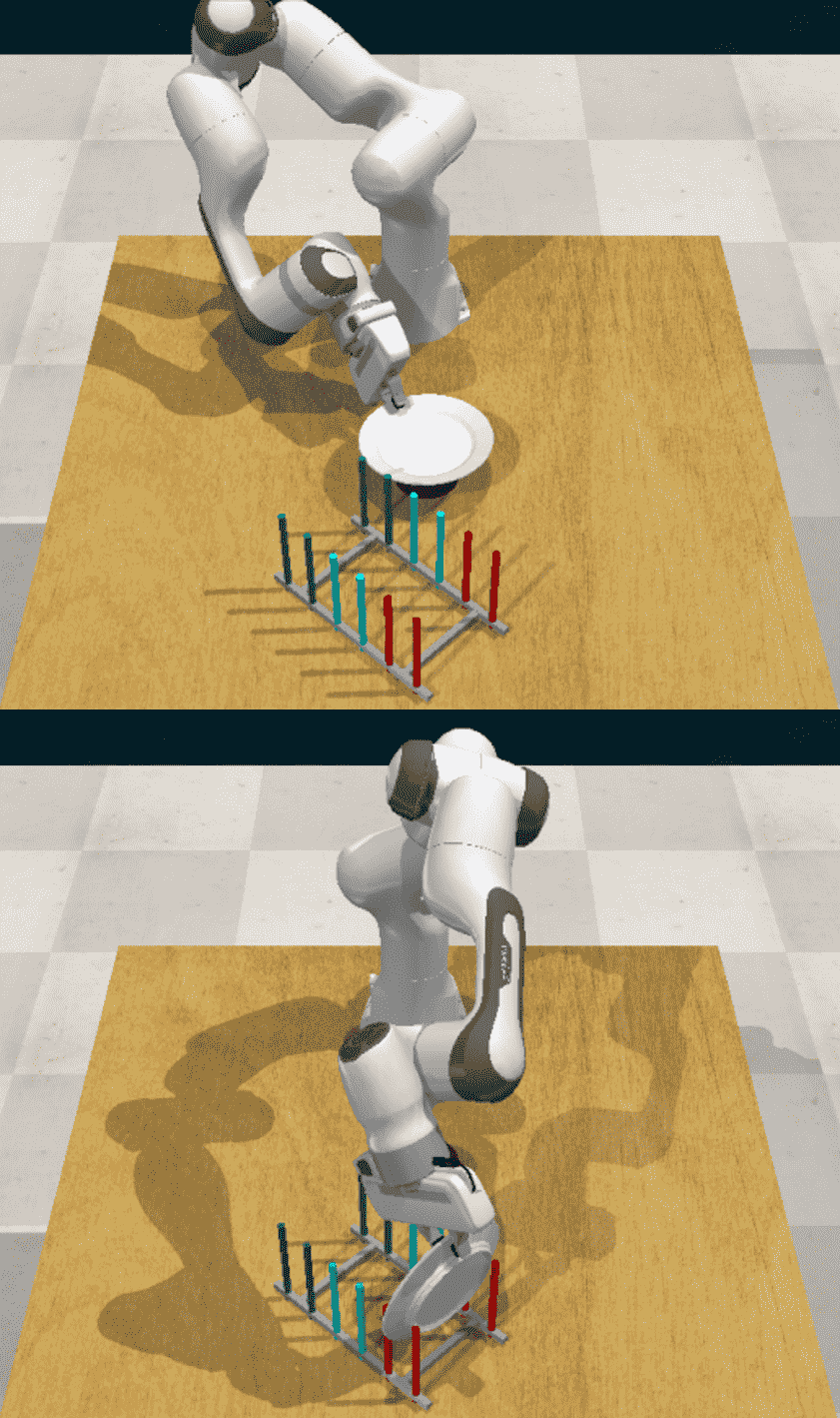}
     \end{subfigure}
     \begin{subfigure}[b]{0.135\textwidth}
         \centering
         \includegraphics[width=0.99\textwidth]{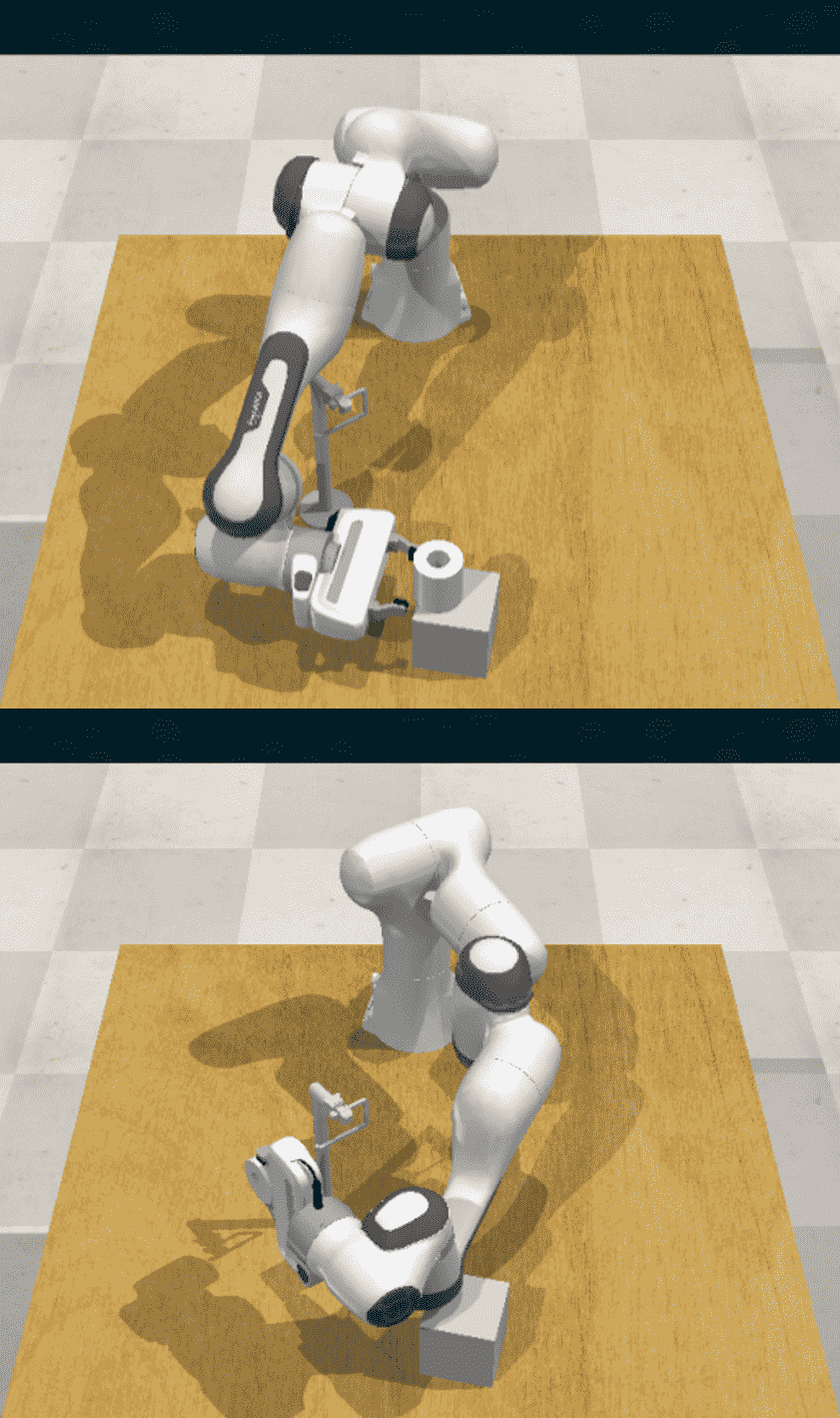}
     \end{subfigure}
      \begin{subfigure}[b]{0.135\textwidth}
         \centering
         \includegraphics[width=0.99\textwidth]{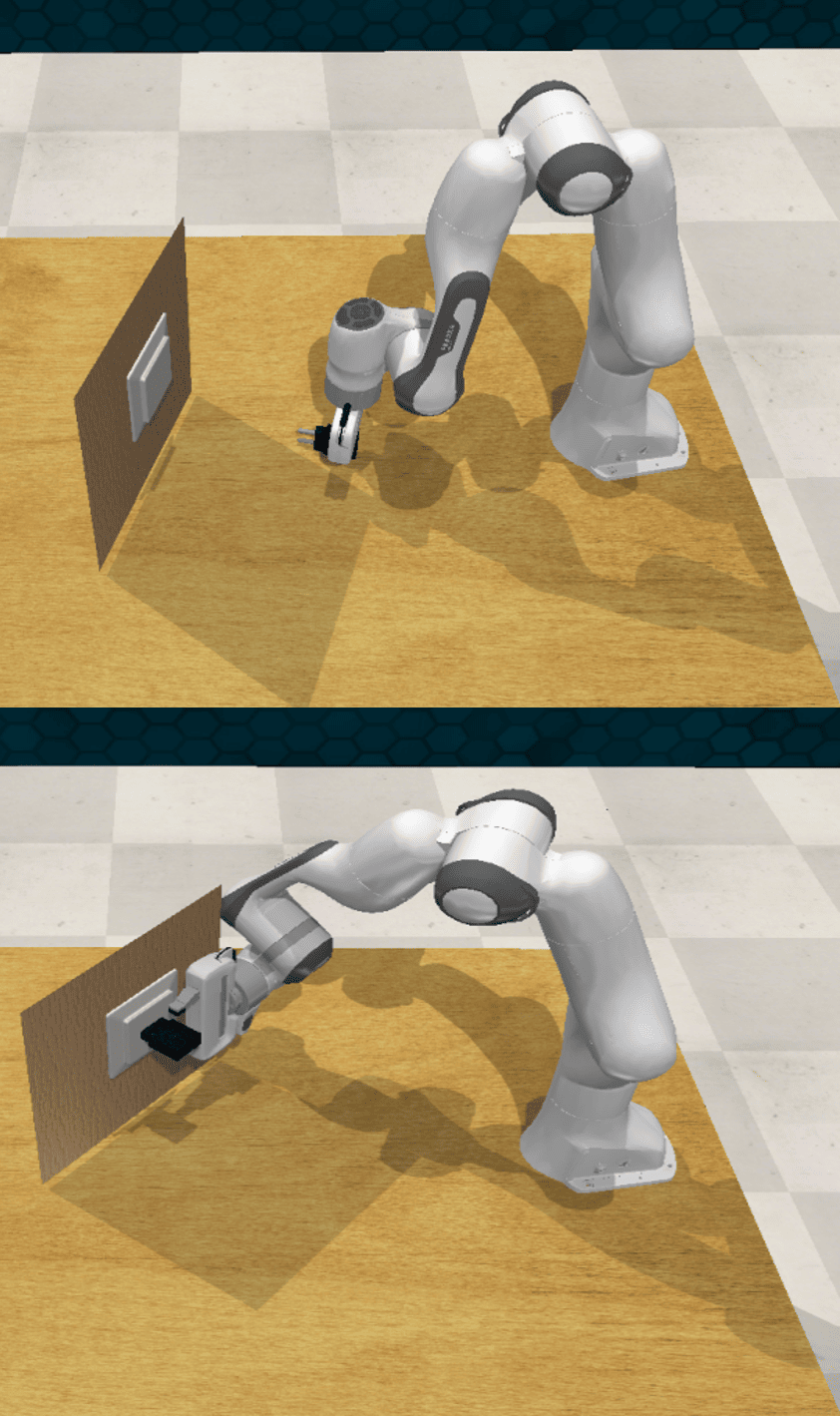}
     \end{subfigure}
      \begin{subfigure}[b]{0.135\textwidth}
         \centering
         \includegraphics[width=0.99\textwidth]{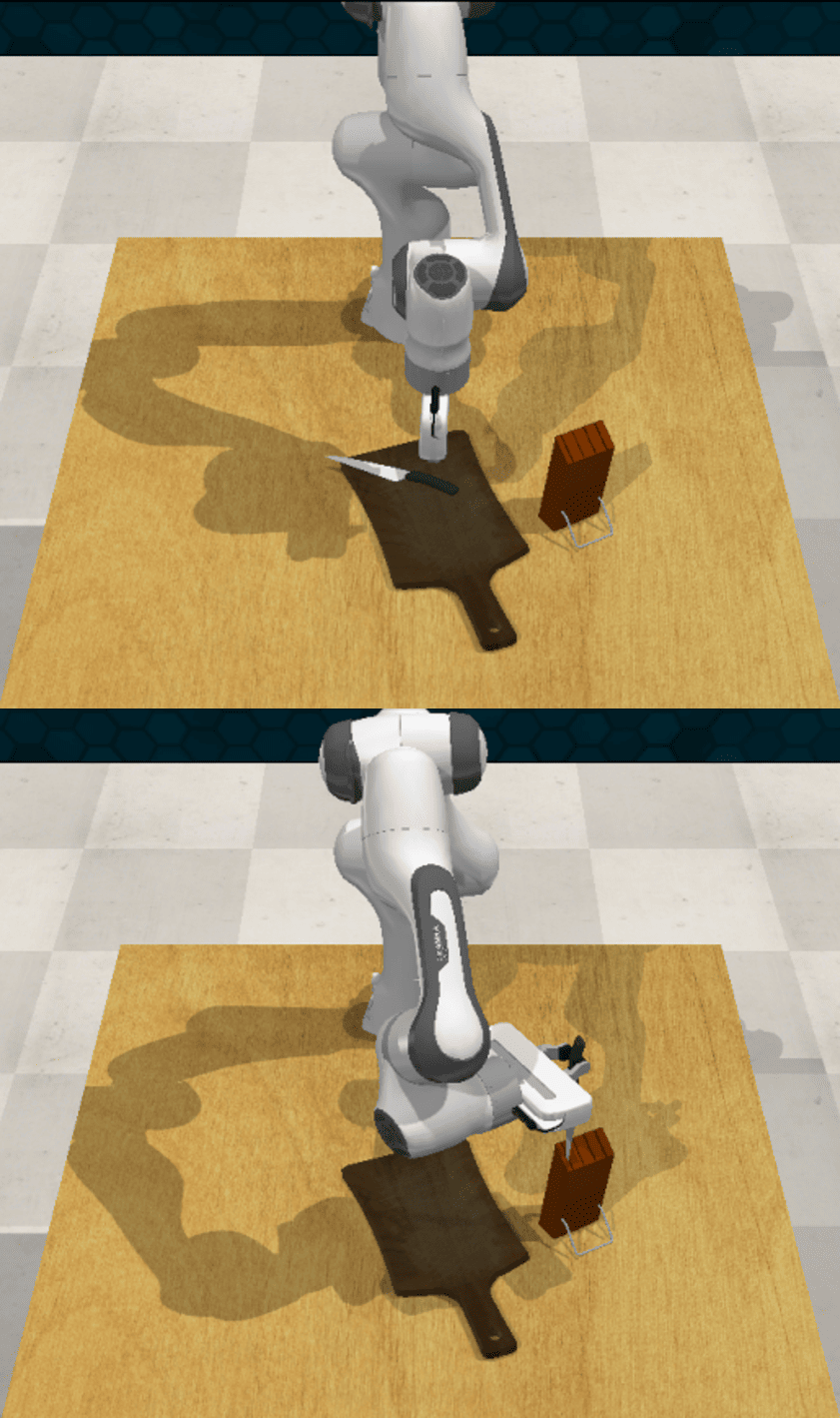}
     \end{subfigure}
     \caption{3D pick-place tasks from RLBench~\cite{james2020rlbench}. The top row shows the initial scene and the bottom row illustrate the completion state. The tasks are: \textit{Phone-on-Base, Stack-Wine, Put-Plate, Slide-Roll, Plug-Charger}, and \textit{Insert-Knife}.\vspace{-0.2cm}}
     \label{fig:3d_task_des}
\end{figure*}

\begin{table*}[t]
    \centering
    \setlength{\tabcolsep}{4pt}
    \fontsize{8pt}{6pt}\selectfont
    \begin{tabular}{cccccccc}
    \toprule
Model & \# demos             & phone-on-base & stack-wine & put-plate & slide-roll & plug-charger & insert-knife 
\\\midrule
\ours{} (ours)  &   1 & 93.33 ($\pm$ 4.62) & {74.67} ($\pm$ 2.31) & 10.67 ($\pm$ 2.31) & 0 & 34.67 ($\pm$ 16.65) & 44.00 ($\pm$ 8.00)  \\
{Imagination Policy~\cite{huang2024imagination}} & 1 & 4.00 ($\pm$ 4.52) & 2.67 ($\pm$ 2.61) & 1.33 ($\pm$ 2.61) & 2.78 ($\pm$ 2.72) & 0 & 0     \\
\midrule

\ours{} (ours)  &   10  & \textbf{100} ($\pm$ 0.00)& \textbf{98.67} ($\pm$ 2.31) & 13.33 ($\pm$ 6.11)& 7.24 ($\pm$ 9.05) & \textbf{40.00} ($\pm$ 4.00) & \textbf{61.33} ($\pm$ 2.31)    \\
Imagination Policy~\cite{huang2024imagination} &  10 &  {90.67} ($\pm$ 2.61) & {97.33} ($\pm$ 2.61) &  {34.67} ($\pm$ 10.45) & \textbf{23.61} ($\pm$ 5.44) & {26.67} ($\pm$ 13.82) & {42.67} ($\pm$ 9.42)               \\

{RPDiff}~\cite{simeonov2023shelving} & {10} & {62.67} ($\pm$ 5.22) &{32.00} ($\pm$ 4.52) & {5.33} ($\pm$ 5.22) & {0} & {0} & {2.67} ($\pm$ 2.61)\\

RVT~\cite{goyal2023rvt}   & 10 & 56.00 ($\pm$ 4.52) & 18.67 ($\pm$ 2.61) & \textbf{53.33} ($\pm$ 6.91) & 0 & 0 & 8.00 ($\pm$ 4.52)                \\
PerAct~\cite{shridhar2023perceiver}   & 10 & 66.67 ($\pm$ 11.39) & 5.33 ($\pm$ 2.62) & 12.00 ($\pm$ 4.52) & 0 & 0 & 0             \\
3D Diffusor Actor~\cite{ke20243d} & 10 & 29.33 ($\pm$ 5.22) & 26.67 ($\pm$ 14.55) & 12.00 ($\pm$ 0.00) & 0 & 0 & 0 \\
\midrule
Key-Frame Expert  &  & 100 & 100 & 74.6 & 56 & 72 & 90.6          \\\bottomrule
\end{tabular}

    \caption{Performance comparisons on RL benchmark. Success rate (\%) on 25 tests when using 1 or 10 demonstration episodes. Results are averaged over 3 runs.}
    \label{tab:3d_results}
    \vspace{-0.7cm}
\end{table*}

We first test our proposed method on several compelling simulated tasks with a limited number of demonstrations. Our primary baselines is Imagination Policy~\cite{huang2024imagination} which is also a bi-equivariant model using segmentations. 
We adopt their simulated experimental settings and baselines to ensure a fair comparison

\subsection{Simulated Experiments}
\textbf{Task Description.} We use the same six RLbench tasks~\cite{james2020rlbench} as used by Imagination Policy~\cite{huang2024imagination}. 
\textit{{Phone-on-Base:}} The agent is asked to pick up the phone and place it onto the phone base correctly.
\textit{{Stack-Wine}}: This task includes grabbing the wine bottle and putting it on the wooden rack at one of three specified locations.
\textit{{Put-Plate}}: The agent must pick up the plate and insert it between the red spokes in the colored dish rack. The colors of other spokes are randomly generated from the full set of 19 color instances.
\textit{{Slide-Roll}}: This task involves grasping the toilet roll and sliding it onto its stand. This task is a high-precision task.
\textit{{Plug-Charger:}} The agent is asked to pick up the charger and plug it into the power supply on the wall. This also requires high precision.
\textit{{Insert-Knife:}} This task requires picking up the knife from the chopping board and sliding it into its slot in the knife block. The different 3D tasks are shown graphically in Fig~\ref{fig:3d_task_des}. During the test, object poses are randomly sampled at the beginning of each episode and the agent must generalize to novel poses.

\textbf{\textbf{Baselines.}} Our method is compared against five strong baselines:
\textit{{Imagination Policy}}~\cite{huang2024imagination} is our primary baseline. It consumes the segmented $P_a$ and $P_b$ to generate the combined point cloud with a point flow model~\cite{wu2023fast} and uses SVD to calculate the registration poses.
\textit{{RPDiff}}~\cite{simeonov2023shelving} consumes segmented $P_a$ and $P_b$ and denoises the relative pose iteratively.
\textit{{PerAct}}
~\cite{shridhar2023perceiver} is a multi-task behavior cloning agent using transformer to process the voxel grids to learn a language-conditioned policy.
\textit{{RVT}}~\cite{goyal2023rvt} projects the 3D observation onto five orthographic images and uses the dense feature map of each image to generate 3D actions.
\textit{{3D Diffuser Actor}}~\cite{ke20243d} is a variation of Diffusion Policy~\cite{chi2023diffusion} that denoises noisy actions conditioned on point cloud features. 
\textit{{Key-Frame Expert}:} Since some tasks are very complex, we also report the performance of the expert agent that uses the key-frame action extracted from the demonstration to measure the effects of path planning.
Other methods like NDFs~\cite{simeonov2022neural} and its variation~\cite{simeonov2023se} are not included since they require per-object pretraining.

\textbf{Settings and metrics.} 
There are four cameras (front, right shoulder, left shoulder, hand) pointing toward the workspace. We use the ground truth mask to segment $P_a$ and $P_b$ for RPDiff~\cite{simeonov2023shelving}, Imagination Policy~\cite{huang2024imagination} and our method.
Since the environments we test on are relatively uncluttered, making it relatively easy to extract object masks from images or voxel maps, segmentation is thus not a performance bottleneck for RVT~\cite{goyal2023rvt} which uses orthographic images or PerAct~\cite{shridhar2023perceiver} which uses voxel maps. 
All methods are evaluated on 25 unseen configurations and each evaluation is averaged over 3 evaluation seeds.

\textbf{Results.} We report the success rate of each method in Table~\ref{tab:3d_results} and draw several findings from it. (1). With 10 demos, $\ours$ outperforms all the baselines on 4 out of 6 tasks. It also performs better with only 1 demo than all baselines trained with 10 demos on 3 out of 6 tasks. (2). $\ours$ achieves better results on \textit{plug-charger} and \textit{insert-knife} which require high precision. (3). The performance gap of $\ours$ between 1 demo and 10 demos is much smaller than that of Imagination-Policy. It validates the invariant property shown in Section~\ref{theory}. (4). $\ours$ shows room for improvement on the \textit{Put-Plate} and \textit{Put-Roll} tasks, likely due to the symmetric nature of the plate and roll, which leads to many unreachable pick-place poses. (1) (2) and (3) demonstrate the sample efficiency and the compelling performance of our proposed method.

\subsection{Test on Different Camera Settings}

\begin{table}[h!]
    \centering
    \setlength{\tabcolsep}{3pt}
    \fontsize{8pt}{5pt}\selectfont
    \begin{tabular}{ccccc}
    \toprule
Model & \# demos  & phone-on-base & stack-wine & insert-knife 
\\\midrule
 $480 \times 480$ images & 10 & {100} & {98.67} & {61.33}     \\ 
\midrule
$128\times 128$ images & 10 & 100 & 92.00 & 40.00 \\ 
\midrule
Single Front View & 10 & 76.00 & 88.00 & 12.00\\
\bottomrule
\end{tabular}
    \caption{Ablation Study on camera settings. Success rate (\%) on 25 tests.\vspace{-0.2cm}}
    \label{tab:camera_results}
    \vspace{-0.2cm}
\end{table}

The ability to adapt to various camera settings is a crucial aspect of a model's robustness and versatility. We test our method with three different camera settings: 1). four RGB-D cameras with high resolution; 2). four RGB-D cameras with low resolution; 3). a single front camera view with high resolution. Table~\ref{tab:camera_results} includes the results of three tasks tested with our method. It shows that low-resolution images slightly decrease the performance and multi-view images provide a more complete observation.

\subsection{Task with Articulated Object}

Unlike rigid bodies, articulated objects consist of several movable parts linked together. We test our proposed method on $\textit{Open Microwave}$ to illustrate its potential for articulated object manipulation. As shown in Fig~\ref{fig:open_microwave}, the task requires the robot to grasp the microwave handle and open the door. We segment the two moveable parts of the microwave and predict the relative pose between the gripper and the door and the relative pose between the door and the frame. The results are reported in Table~\ref{tab:microwave_results}. Please note complex articulated object manipulation can also be implemented in the same way as inferring the relative poses between links. The majority of failure cases occurred when the registration model inaccurately registered the door, leading to errors of place action.

\begin{figure}[t]
    \centering
    \includegraphics[width = 0.4\textwidth]{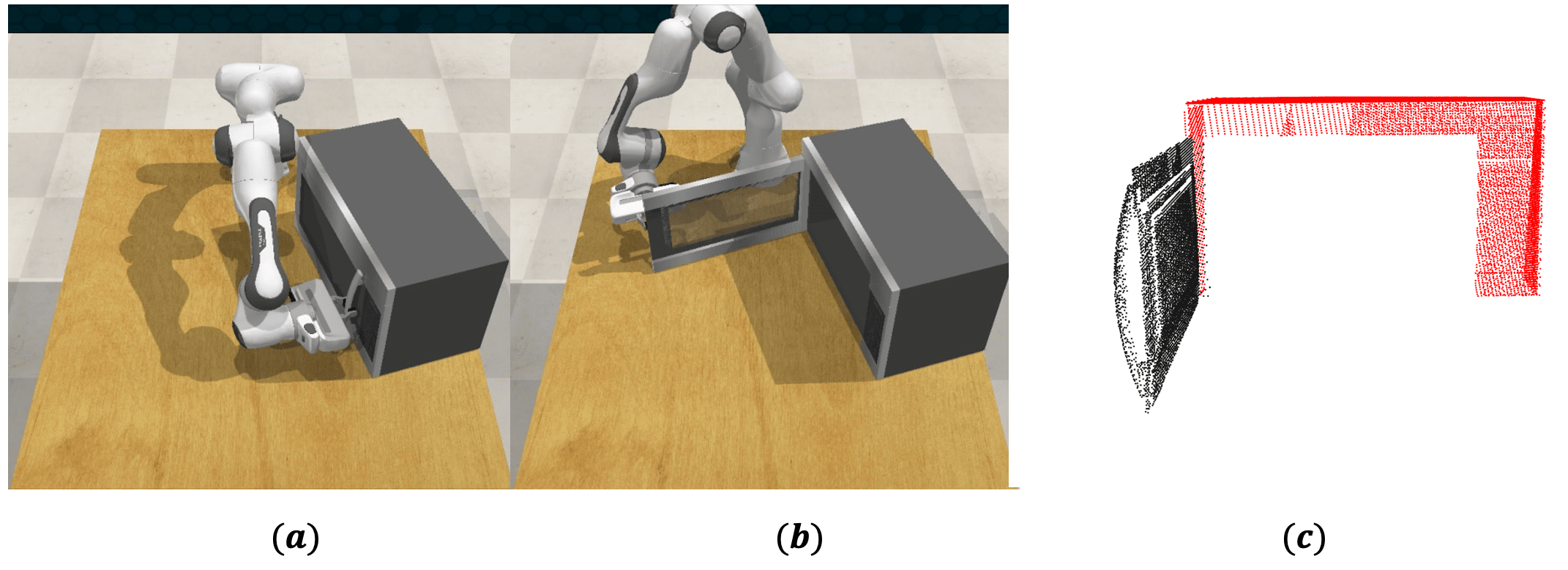}
    \caption{Articulated Object Manipulation: Open Microwave. (a). grasp the handle of the microwave, (b). open the door of the microwave, (c). segmentations of the door with handle (black color) and microwave frame (red color).\vspace{-0.1cm}}
    \label{fig:open_microwave}
    \vspace{-0.2cm}
\end{figure}

\begin{table}[t]
    \centering
    \setlength{\tabcolsep}{8pt}
    \fontsize{8pt}{6pt}\selectfont
    \renewcommand{\arraystretch}{1.5}
    \begin{tabular}{c|c|c|c}
    \toprule
Task  & \# demos & $\ours$ & expert 
\\
\hline
open-microwave & 10 & {32.00} & {92.00}     \\ 
\bottomrule
\end{tabular}
    \caption{Performance on Open-Microwave. Success rate (\%) on 25 tests using 10 demonstrations.\vspace{-0.4cm}}
    \label{tab:microwave_results}
    \vspace{-0.25cm}
\end{table}

\subsection{Task with Long Horizon}

\begin{figure}[H]
    \centering
    \includegraphics[width = 0.45\textwidth]{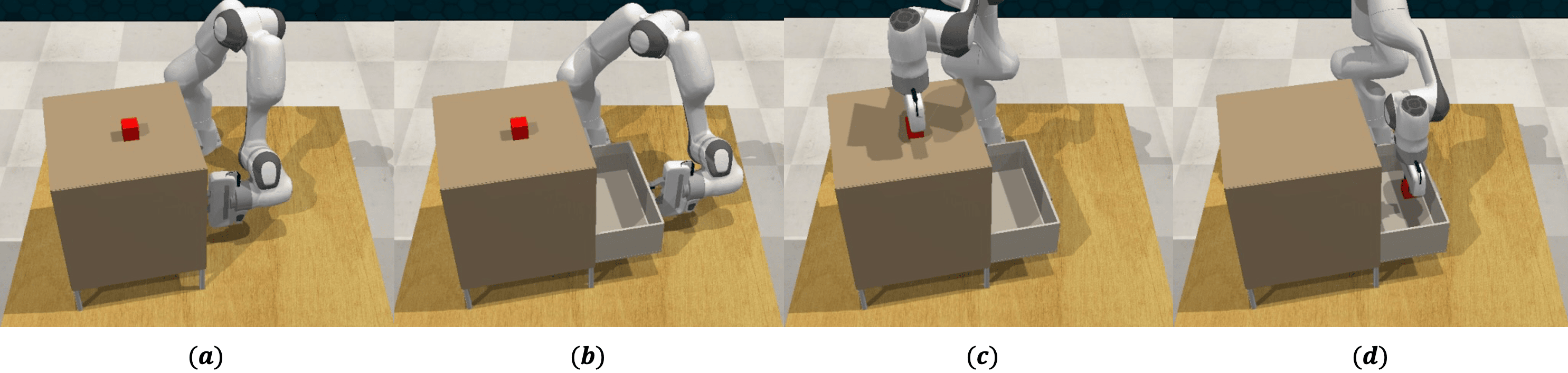}
    \caption{Long Horizon Task: Put Item in Drawer. From left to right: (a). grasp the handle of the drawer; (b). open the drawer; (c). pick up the red block; (d). put the block in the drawer. }
    \label{fig:put_item_in_drawer}
    \vspace{-0.5cm}
\end{figure}

\begin{table}[h!]
    \centering
    \setlength{\tabcolsep}{8pt}
    \fontsize{8pt}{6pt}\selectfont
    \renewcommand{\arraystretch}{1.5}
    \begin{tabular}{c|c|c|c}
    \toprule
Task  & \# demos & $\ours$ & expert 
\\
\hline
 put-item-in-drawer & 10 & {96.00} & {96.00}     \\ 
\bottomrule
\end{tabular}
    \caption{Performance on Put-Item-in-Drawer. Success rate (\%) on 25 tests using 10 demonstrations.\vspace{-0.4cm}}
    \label{tab:put_item_in_drawer_results}
    \vspace{-0.1cm}
\end{table}

Many complex tasks can be decomposed as a sequence of pick-place actions. We test $\ours$ on a long-horizon task
- \textit{Put Item in Drawer}. As shown in Fig~\ref{fig:put_item_in_drawer}, the agent needs to first open the bottom drawer and then pick up the red block and finally put it in the drawer. We address it by inferring two pick-place actions the results are included in Table~\ref{tab:put_item_in_drawer_results}. It shows that our proposed method achieves 96\% success rate which is the same as the oracle agent's performance. 

\section{Real-robot Experiments}
Our proposed method can efficiently be deployed on real-robot tasks without any training, and we test it on 6 real-robot tasks by collecting only 10 demos for each task. To evaluate $\ours$ comprehensively, we assess its performance across different numbers of camera views and fitness scores. 
\begin{figure*}[t]
    \centering
    \includegraphics[width = 0.9\textwidth]{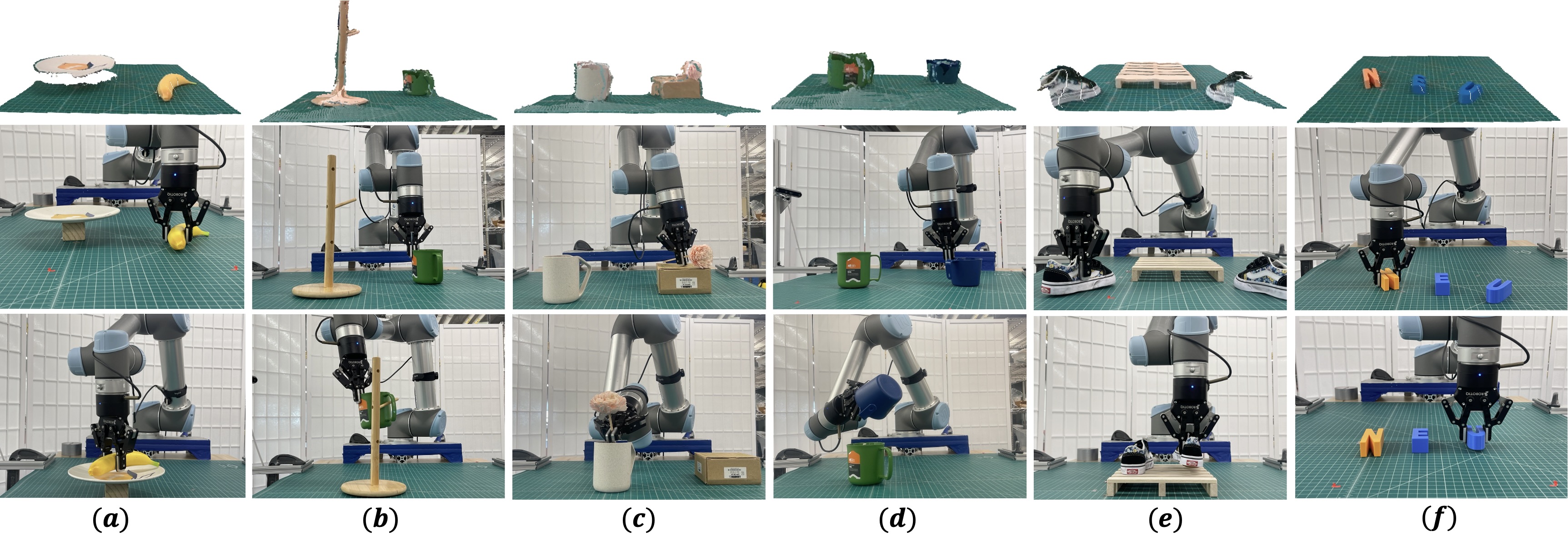}
    \caption{Real-robot tasks. The top row shows the observed real-sensor point clouds, the second row indicates the first pick action and the last row shows the complete state. Task from left to right: (a). putting banana; (b). hanging mug; (c). inserting flower; (d). pouring ball; (e). packing shoes; (f) arranging letters.\vspace{-0.1cm}}
    \label{fig:real_tasks}
    \vspace{-0.1cm}
\end{figure*}

\vspace{-0.2cm}
\textbf{Hardware Settings.}
The experiment is performed on a UR5 robot on a $48\mathrm{cm}\times 48\mathrm{cm}\times48\mathrm{cm}$ workspace. There are three RealSense 455 cameras mounted around the workspace. To collect the demos, we released the UR5 brakes to push the arm physically.
For execution on a robot, it requires a collision-free pick-and-place trajectory that connects the predicted actions.
We use MoveIt with RRT-star as our motion planner to generate the trajectory.

\textbf{Tasks.}
As shown in Fig~\ref{fig:real_tasks}, we test six different real-robot tasks. The action space of all the tasks are defined in $\SE(3)$. {\textit{Putting-Banana:}} This task asks the agent to pick up the banana and put it on the plate. {\textit{Hanging-Mug:}} The robot needs to pick up the mug and hanging it on the poke of the mug holder. {\textit{Inserting-Flower:}} This task includes picking up the flower and plugging it into the base. {\textit{Pouring-Ball:}} The agent is asked to grasp the small blue cup and pour the white ball into the big green cup. {\textit{Packing-Shoes:}} It is a multi-step task and includes picking a pair of shoes and placing them on the shelf. {\textit{Arranging-Letters:}} It is a multi-step task requiring the agent to arrange three letters with two pick-place actions.

\textbf{Results.}
Objects are randomly placed in the workspace during testing, and we run 15 tests for each task. Our results are included in Table~\ref{tab:real_results}. We report the action success rate for each step as well as the task completion rates. For \textit{Putting-Banana}, \textit{Packing-Shoes}, and \textit{Arranging-Letters}, we use a single camera view and set the fitness threshold as 0.7 to filter out low-confidence actions. For the other three tasks, we use three camera views without the fitness threshold. As shown in the first row of Fig~\ref{fig:real_tasks}, real-sensor data is typically noisy due to occlusion and distortion. However, $\ours$ can achieve above 90\% success rate on \textit{Putting-Banana} and \textit{Packing-Shoes} with only 10 demos. We observe a performance drop in \textit{Hanging-Mug} and \textit{Inserting-Flower} and we hypothesize that sensor noise is a key factor of this decline, which causes critical features like the hanger's hook and the mug's handle to disappear. This results in inaccurate registration poses and affects task performance. We provide the real-robot videos in the supplementary materials.


\begin{table*}[t!]
    \centering
    \setlength{\tabcolsep}{4pt}
    \fontsize{8pt}{5pt}\selectfont
    \renewcommand{\arraystretch}{1.5}
    \begin{tabular}{@{}l*{17}{>{\centering\arraybackslash}p{9mm}@{}}}
    \toprule
Task & \multicolumn{2}{c}{putting-banana} & \multicolumn{2}{c}{hanging-mug} & \multicolumn{2}{c}{inserting-flower} & \multicolumn{2}{c}{pouring-ball} & \multicolumn{4}{c}{packing-shoes} & \multicolumn{4}{c}{arranging-letters} 
\\ 
\midrule
\# camera view & \multicolumn{2}{c}{single camera} & \multicolumn{2}{c}{three cameras} & \multicolumn{2}{c}{three cameras} & \multicolumn{2}{c}{three cameras} & \multicolumn{4}{c}{single camera} & \multicolumn{4}{c}{single camera} \\

\midrule

fitness threshold & \multicolumn{2}{c}{0.7} & \multicolumn{2}{c}{0.0} & \multicolumn{2}{c}{0.0} & \multicolumn{2}{c}{0.0} & \multicolumn{4}{c}{0.7} & \multicolumn{4}{c}{0.7} \\

\cmidrule(lr){2-3}  \cmidrule(lr){4-5}  \cmidrule(lr){6-7} \cmidrule(lr){8-9} \cmidrule(lr){10-13} \cmidrule(lr){14-17}
& pick & place & pick & place & pick & place & pick & place & pick & place & pick & place & pick & place & pick & place\\
\midrule
\#success/\#trival & 14/15 & 14/14 & 13/15 & 6/13  & 6/15 & 4/6 & 14/15 & 10/14  & 15/15 & 15/15 & 15/15 & 15/15 & 14/15 & 11/14 & 10/15 & 9/10\\
action success rate (\%) & 93.3 & 100 & 87.0 & 46.2 & 40.0 & 66.7 & 93.3 & 71.4 & 100 & 100 & 100 & 100 & 93.3 & 78.6 & 66.7 & 90 \\
\midrule
completion rate (\%) & \multicolumn{2}{c}{93.3} & \multicolumn{2}{c}{40.0} & \multicolumn{2}{c}{27.0} & \multicolumn{2}{c}{66.7} & \multicolumn{4}{c}{100} & \multicolumn{4}{c}{66.7}  \\
  
\bottomrule
\end{tabular}
    \caption{Performance on real-robot experiments. Success rate (\%) on 15 tests for each task using 10 demonstrations.\vspace{-0.3cm}}
    \label{tab:real_results}
    \vspace{-0.2cm}
\end{table*}

\section{Conclusion}
In this work, we proposed $\ours$, a simple yet effective pipeline that leverages point cloud registration to address manipulation pick-place tasks. Our goal is to provide a convenient and practical tool for pick-place policies that requires minimal effort to deploy across different tasks. $\ours$ demonstrates significant improvement in sample efficiency and strong performance across various tasks. It can also be applied to the long-horizon task and articulated object manipulation. We also provide a theoretical analysis of its performance, focusing on symmetric properties. Finally, we validate our proposed method on a real robot through six tasks and different settings.

One limitation of this work is that it can only output an open-loop policy instead of a high-frequency closed-loop policy. 
We believe the pick-place policy holds significant value for solving real-world challenges, particularly in industrial settings such as picking, packing, sorting, and stacking.
Another limitation is that the current formulation requires segmenting the object of interest. Fortunately, a large number of SOTA segmentation methods~\cite{kirillov2023segment, ke2024segment} provide a convenient tool to address it. Lastly, this work assumes the object is seen and cannot be generalized to novel objects. We believe the current research on unseen object registration~\cite{qi2025twobytwo,chen2022neural, zhang2024generative, zhang2024comprehensive} can gradually address this issue and we leave it as our feature work.

\section{Acknowledgments}
{This project were supported in part by NSF 1750649, NSF 2107256, NSF 2314182, NSF 2134178, NSF 2409351, and NASA 80NSSC19K1474.}

\clearpage
\newpage

\newpage
\bibliographystyle{IEEEtran}
\bibliography{references}
\end{document}